\title{\LARGE \bf
Robot Adaptation for Generating Consistent Navigational \\Behaviors over Unstructured Off-Road Terrain}
\author{Sriram Siva$^{1}$, Maggie Wigness$^{2}$, John G. Rogers$^{2}$, and Hao Zhang$^{1}$
\thanks{$^{1}$Sriram Siva and Hao Zhang are with the Human-Centered Robotics Lab in the Department of Computer Science,
        Colorado School of Mines, Golden, CO 80401, USA.
        {Email: \{sivasriram, hzhang\}@mines.edu}.%
        }
 \thanks{$^{2}$Maggie Wigness and John Rogers are with the US Army Research Laboratory (ARL), Adelphi, MD 20783, USA.  {Email: \{maggie.b.wigness.civ, john.g.rogers59.civ\}@mail.mil}.
}
 \thanks{This work was supported by NSF CAREER Award IIS-1942056
and ARL SARA Program W911NF-20-2-0107.
The authors also would like to thank Mr. Eric Spero at ARL for the discussion of this work and for facilitating the experimentation.
}
}
\newcommand{\HZ}[1]{\noindent\textbf{\color{red} *Zhang: #1}}
\begin{document}

\maketitle
\thispagestyle{empty}
\pagestyle{empty}

\begin{abstract}
Terrain adaptation is an essential capability for a ground robot to effectively traverse unstructured off-road terrain in real-world field environments such as forests.
However, the expected robot behaviors generated by terrain adaptation methods cannot always be executed accurately
due to setbacks such as wheel slip and reduced tire pressure.
To address this problem,
we propose a novel approach for consistent behavior generation that enables the ground robot's actual behaviors to more accurately match expected behaviors
while adapting to a variety of unstructured off-road terrain.
Our approach learns offset behaviors that are used to compensate for the inconsistency between the actual and expected behaviors without requiring the explicit modeling of various setbacks.
Our approach is also able to estimate the importance of the multi-modal features to improve terrain representations for better adaptation.
In addition, we develop an algorithmic solver for our formulated regularized optimization problem,
which is guaranteed to converge to the global optimal solution.
To evaluate the method,
we perform extensive experiments using various unstructured off-road terrain in real-world field environments.
Experimental results have validated that our approach enables robots to traverse complex unstructured off-road terrain with more navigational behavior consistency, and it outperforms previous methods, particularly so on challenging terrain.

\end{abstract}

\color{black}
\section{Introduction}

Over the past several years, autonomous ground robots have been increasingly deployed in off-road field environments to address real-world applications, including disaster response, homeland defense, and planetary exploration~\cite{chiang2020safety, bruzzone2012robots}.
Field environments are challenging for ground robots to navigate over
because the terrain is unstructured
and cannot be fully modeled beforehand~\cite{silver2010learning},
and also the
environment typically exhibits a wide variety of characteristics (e.g, changing terrain types, and varying slope and roughness), as demonstrated in Fig. \ref{Motivation}.
While operating in a field environment,
robots need to generate effective navigational behaviors for traversing over the off-road terrain in order to successfully complete navigation tasks.
Robot terrain adaptation is then an essential capability for robots to adapt their navigational behaviors according to unstructured off-road terrains~\cite{siva2019robot}.

Given its importance, the research problem of robot terrain adaptation has been widely investigated over the past several years.
Previous learning-based methods can be divided into two broad categories: terrain classification and terrain adaptation.
The first category of approaches apply
a \color{black}
robot's exteroceptive and proprioceptive sensory data to classify  terrain types and estimate traversability for a robot to navigate over the terrain~\cite{bermudez2012performance, brooks2005vibration, devjanin1983six, dupont2008terrain, hudjakov2009aerial}.
This category also includes methods that model terrain complexity for planning robot navigation tasks~\cite{silver2010learning,peynot2014learned}.
The second category of methods focus on directly generating adaptive navigational behaviors according to terrain in order to successfully complete navigation tasks~\cite{silver2010learning,han2017sequence,pastor2009learning,wigness2018robot}.
Specifically, learning from demonstration (LfD) is widely used to transfer human expertise to robots in order to achieve human-level robot navigational control~\cite{siva2019robot,wigness2018robot,ge2015learning}.

\begin{figure}[t]
\centering
\vspace{6pt}
\includegraphics[width=0.48\textwidth]{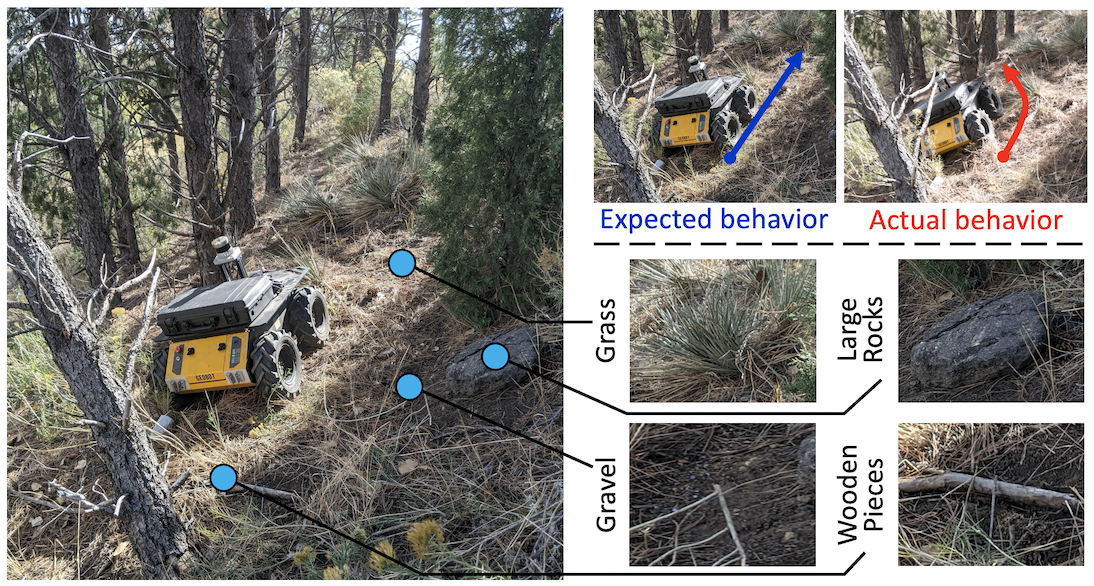}
\caption{This motivating scenario illustrates the necessity for ground robots to generate consistent navigational behaviors 
 while operating in real-world field environments. 
The off-road environment is unstructured and exhibits a variety of characteristics, including changing terrain types and slopes. 
When ground robots are deployed in this environment, 
their actual behaviors often do not match the expected behaviors, e.g., due to wheel slip.
Thus, the capability of consistent behavior generation is essential for ground robots while they navigate over unstructured off-road terrain.
}\label{Motivation}
\end{figure}

Despite their promising performance in complex off-road environments,
the expected navigational behaviors generated by these approaches
cannot always be executed accurately,
i.e., the actual robot navigational behaviors are not consistent with the expected behaviors.
This inconsistency is mainly caused by setbacks~\cite{knight2001balancing, borges2019strategy} that are defined as factors that increase the difficulty for a robot to achieve its expected navigational behaviors. Example setbacks include wheel slip, heavy robot payload,
 and reduced tire pressure.
\color{black}
Existing learning-based approaches (e.g., LfD) for robot navigation generally ignore
these
setbacks,
which often leads to
\color{black}
not being able to consistently complete the learned navigational behaviors
when deployed
\color{black}
over unstructured off-road terrain.
The challenge of how to generate consistent navigational behaviors for robot terrain adaptation has not been well addressed. 



In this paper, we develop a novel approach for consistent behavior generation that enables a robot's actual navigational behaviors to match the expected behaviors
while adapting to a variety of unstructured off-road terrain during navigation.
Our approach learns offset behaviors for the robot to compensate for the inconsistency between the actual and expected navigational behaviors without explicitly modeling the setbacks, while also adaptively navigating over various terrain.
In addition, our approach is able to integrate multi-modal features to characterize terrain
and automatically estimate the importance of these features.
This is all implemented in a unified mathematical regularized optimization framework with a theoretical convergence guarantee.



The novelty of this paper is twofold:
\begin{itemize}
\item We propose a novel formulation to generate consistent navigational behaviors via generating offset behaviors. We also introduce two regularization terms to learn important terrain features and historic behavior differences to enable robot adaptation to unstructured terrain.
\item We propose a new optimization algorithm to address the formulated convex regularized optimization problem with dependent variables,
which holds a theoretical guarantee to effectively converge to the global optimal solution.
\end{itemize}
As an experimental contribution, we provide a comprehensive performance evaluation of learning-based terrain adaptation methods by designing a set of live robot navigation scenarios over a wide variety of individual and complex unstructured off-road terrain.

The remainder of the paper is organized as follows. A
review related work is provided in Section \ref{sec:related_work}. The proposed approach is discussed
in Section \ref{sec:RTA}.
After describing the experimental results in Section \ref{sec:EXPT}, 
we conclude the paper in Section \ref{sec:CONC}.

\section{Related Work}\label{sec:related_work}

In this section, we provide a review of existing methods of terrain classification and robot adaptation that have been used to enhance autonomous navigation.

\subsection{Terrain Classification}
Terrain classification methods use sensory data from a robot to classify the terrain.  
Many earlier methods were developed to address the specific needs of larger vehicles \cite{jumikis1970introduction}. 
For example, in the DARPA Grand Challenge, Urmson et al. \cite{urmson2004high} designed a pre-selected set of optimal speeds for each terrain,
which uses sensory data cues collected over many months to categorize individual types of terrain. 
Some methods used a pre-existing terrain map to achieve high-speed terrain navigation \cite{shimoda2005potential,spenko2006hazard}. Techniques were also designed to use terrain ruggedness data to evaluate navigational behaviors in an online fashion for ground vehicles \cite{brooks2005vibration,sadhukhan2004terrain}. 

Recently, learning-based methods have been used to classify terrains. For example, an SVM classifier was used on terrain features learned from Hidden Markov Models to identify the terrain \cite{trautmann2011mobility}. Color-based terrain classification was performed to generate robot navigational behaviors by labeling obstacles \cite{manduchi2005obstacle}. However, these methods rely on a discrete categorization of terrain types to perform robot navigation.
In real-world field environments, unstructured off-road terrains have a wide variety of characteristics and cannot be easily categorized into distinctive types. 
Thus, these methods typically do not work well to characterize complex unstructured off-road terrain and cannot directly enable robot adaptation to unstructured terrain in real-world field environments.

\subsection{Robot Adaptation}

The second category of methods focus on enabling robots to intelligently adapt to unstructured terrain. The general robot adaptation problem is commonly investigated in robotics \cite{parker1996alliance, parker2000lifelong, he2019underactuated, nikolaidis2017human, papadakis2013terrain} using high-level behavior models. Case-based reasoning was used to adapt a robot in rapidly changing environments \cite{watson1994case}. Earlier works considered ground speed as the optimization variable and formulated a method for trading {progress} and {velocity} with changing terrain characteristics \cite{fox1997dynamic}.
Recently, an array of inertial and ultrasonic sensors were used to recognize soil properties that allow robots to perform terrain adaptation effectively \cite{nabulsi2006multiple}.

Learning based methods for terrain adaptation have gained significant attention because of their effectiveness and flexibility \cite{thrun1998lifelong}. Early work addressed terrain adaptation from the perspective of online learning that updates model parameters in the execution phase \cite{kleiner2002towards}. Although successful, this approach lacks the ability to fast adapt on the fly to deal with sudden terrain changes. To overcome this challenge, methods were developed that generate navigational behaviors according to the predicted terrain characteristics \cite{plagemann2008learning}. Similarly, reinforcement learning based navigation was developed to generate stable locomotion patterns for terrain adaptation \cite{erden2008free}. Methods based on inverse reinforcement learning were used to mimic expert navigation controls to achieve human-level maneuverability \cite{wigness2018robot}. More recently, a unified method for simultaneous terrain classification and apprenticeship learning showed promising results on robot adaptation to unstructured terrain \cite{siva2019robot}.

Given the promise of the previously mentioned learning-based terrain adaptation techniques, almost all methods focus solely on learning the expected behaviors, without addressing the problem of generating consistent navigation behaviors. 
Our approach is introduced to address this challenge that has not been well addressed by the existing learning-based robot terrain adaptation methods.



\color{black}

\section{Approach}\label{sec:RTA}


In this section, we discuss the proposed method for consistent navigational
behavior generation and our algorithm to solve the formulated regularized optimization problem.
\color{black}
\subsection{Problem Formulation}\label{sec:RAL-A}

As a robot traverses over terrain,
at each time step, we extract multi-modal features from observations acquired from multiple sensors installed on the robot (e.g., visual camera, LiDAR, and IMU).
We concatenate all features extracted at time point $t$ into a vector and denote it as $\mathbf{x}^{(t)} \in \mathbb{R}^q$, where $q=\sum_{j=1}^{m} q_j $, $q_j$ is the dimensionality of the $j$-th feature modality, and $m$ is the number of  modalities. 
We represent features extracted from a sequence of consecutive $c$ time points as a data instance
$\mathbf{x} = [\mathbf{x}^{(t)};\dots;\mathbf{x}^{(t-c)}] \in \mathbb{R}^{d}$, where $d=c \times q$.
We further denote the set of $n$ data instances for training our approach
as $\mathbf{X} = [\mathbf{x}_{1}, \dots , \mathbf{x}_{n}] \in \mathbb{R}^{d \times n}$.

We use $\mathbf{Y} =[\mathbf{y}_{1}, \dots , \mathbf{y}_{n}]  \in \mathbb{R}^{r \times n}$ to denote the expected navigational behaviors of the robot associated with $\mathbf{X}$,
where $\mathbf{y}_{i} \in \mathbb{R}^{r}$ is a vector of $r$ behavior control variables corresponding to  $\mathbf{x}_{i}$. These variables represent the behaviors that the robot is expected to execute (e.g., velocity, motor torque, and steering angle) when traversing over the terrain.
Due to momentum, we know the robot's behaviors executed at the present time are dependent on previous time steps, so
we estimate the robot's behaviors $\mathbf{y}_i$
using  $\mathbf{x}_{i} = [\mathbf{x}_{i}^{(t)};\dots;\mathbf{x}_{i}^{(t-c)}]$,
 taking into account the history of $c$ observations.
Then, the problem of
navigational behavior estimation can be formulated as:
\begin{equation}\label{eq0}
\min_{\mathbf{W}} \Vert \mathbf{Y} - \mathbf{W}^\top \mathbf{X}  \Vert_{F}^{2} + \lambda_{1}\Vert \mathbf{W} \Vert_{M}
\end{equation}
where $\mathbf{W} \in \mathbb{R}^{d \times r}$  is a weight matrix that is illustrated in Fig. \ref{weight_martix_w}.
Each element $\mathbf{w}^{j(k)}_{i} \in \mathbf{W}$ indicates the importance of the $i$-th feature modality in $\mathbf{x}^{(k)}$ for generating the $j$-th navigational behavior.

\begin{figure}[tb]
\centering
\includegraphics[width=0.48\textwidth]{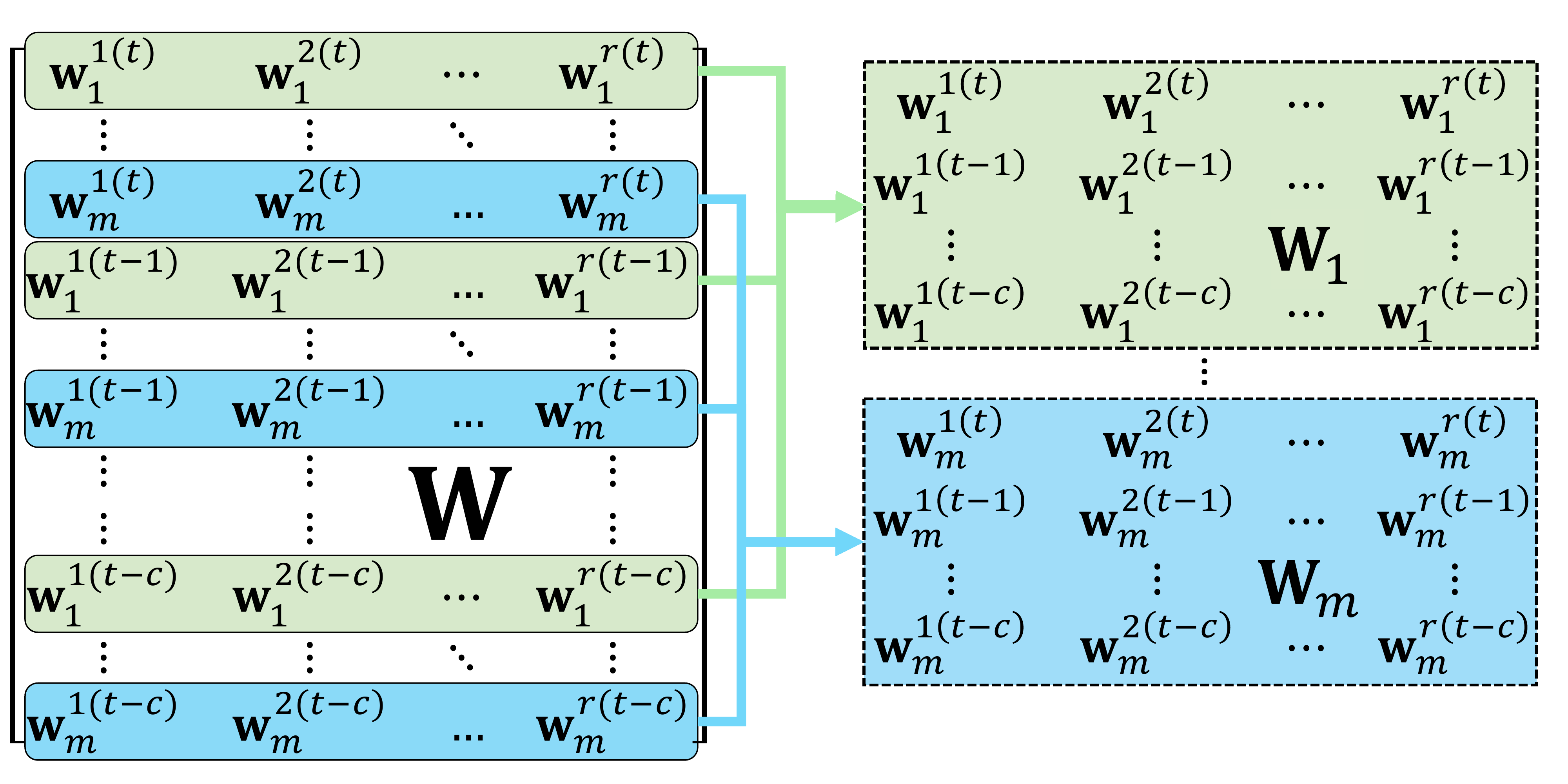}
\caption{Illustration of the weight matrix $\mathbf{W}$, which can be decomposed into $[\mathbf{W}_1,\dots,\mathbf{W}_m]$.
Each $\mathbf{W}_i$ (e.g., the green and blue matrices on the right) represents the importance of the $i$-th feature modality.}
\label{weight_martix_w}
\end{figure}

The first term in Eq. (\ref{eq0}) is a loss function that encodes the error of using a linear model parameterized by $\mathbf{W}$ to project $\mathbf{X}$ to $\mathbf{Y}$.
The second term is a regularization term named
 the
feature modality norm and is mathematically defined as:
\begin{equation}
\Vert \mathbf{W} \Vert_{M} = \sum_{i=1}^{m}\Vert \mathbf{W}_{i} \Vert_{F} =   \sum_{i=1}^{m}\sqrt{\sum_{j=1}^{r}\sum_{k=t}^{t-c}(\mathbf{w}^{j(k)}_{i})^\top( \mathbf{w}^{j(k)}_{i})}
\end{equation}
where $\Vert . \Vert_{F}$ is the Frobenius norm.
The feature modality norm groups together weights within a feature modality and enforces sparsity among different modalities, thus,
identifying the most descriptive features for behavior generation.
This is a critical capability for
 ground  robot
navigation
\color{black}
since different features typically capture different characteristics of the unstructured terrains
(e.g., color, slope, and roughness),
and have different effects toward generating navigational behaviors.
The trade-off hyperparameter $\lambda_{1}$ in Eq. (\ref{eq0}) is used to balance the loss and the regularization term.

The problem formulation in Eq. (\ref{eq0}) allows ground robots to adapt their navigational behaviors according to different terrain features.
However, due to setbacks that reduce the effectiveness of robot navigation,
such as wheel slip, heavy payload, and reduced tire pressure~\cite{sidek2013exploiting},
the robot's actual behaviors do not match the expected behaviors.

\color{black}

\subsection{Consistent Navigational Behavior Learning}

The key novelty of this paper is that we propose a principled method for
ground robots to generate consistent navigational behaviors that adapt to unstructured terrain.
Our approach monitors the difference between the actual and the expected navigational behaviors caused by the setbacks,
and computes an offset to reduce
the difference. This allows our approach to achieve consistent robot behaviors
without the requirement of
explicitly
modeling all the setbacks.
\color{black}



Formally, we denote the actual behaviors executed by the robot as $\hat{\mathbf{Y}} =[\hat{\mathbf{y}_{1}}, \dots , \hat{\mathbf{y}_{n}}]  \in \mathbb{R}^{r\times n}$, where $\hat{\mathbf{y}}_{i} \in \mathbb{R}^{r}$ denotes the actual behaviors executed by the robot when observing $\mathbf{x}_{i}$.
We define that the actual behaviors $\hat{\mathbf{y}_{i}}$ is composed by
the expected behaviors $\mathbf{y}_{i}$ and the offset behaviors $\mathbf{v}_{i} \in \mathbb{R}^{r}$,
i.e., $\hat{\mathbf{y}_{i}} = \mathbf{y}_{i} + \mathbf{v}_{i}$.
The offset behaviors $\mathbf{v}_{i}$ is computed as $\mathbf{v}_{i} = \mathbf{U}^{\top}\mathbf{e}_{i}$,
where $\mathbf{e}_{i} = [(\hat{\mathbf{y}}_{i}^{(t)}-\mathbf{y}^{(t)}_{i});\dots;(\hat{\mathbf{y}}^{(t-c)}_{i}-\mathbf{y}^{(t-c)}_{i})] \in \mathbb{R}^{rc}$
denotes a vector of differences between actual and expected behaviors
in the
previous
$c$ time steps.
$\mathbf{U} = [\mathbf{u}^1,\dots,\mathbf{u}^r] \in \mathbb{R}^{rc \times r}$ is the weight matrix,
and $\mathbf{u}^{j} \in \mathbb{R}^{rc}$ indicates the importance of $\mathbf{e}_{i}$ towards generating the $j$-th element in $\mathbf{v}_{i}$.
Using data from a history of $c$ time steps allows our method to consider inertia (i.e., resistance to change in behaviors) during navigation.

Then, generating consistent navigational behaviors can be formulated as:
\begin{equation}\label{eq1}
\min_{\mathbf{U},\mathbf{W}} \Vert \hat{\mathbf{Y}} - \mathbf{W}^\top \mathbf{X} - \mathbf{U}^\top \mathbf{E}  \Vert_{F}^{2} + \lambda_{1}\Vert \mathbf{W} \Vert_{M}
\end{equation}
where $\mathbf{E} = [\mathbf{e_{1}},\dots,\mathbf{e}_{n}] \in \mathbb{R}^{rc \times n}$.
The loss function models the actual behavior by considering both $\mathbf{X}$ and $\mathbf{E}$ to achieve consistent navigational behaviors.
Because of inertia, historical data from different past time steps may contribute differently towards generating offset behaviors
(e.g., a heavier robot with bigger inertia often needs to consider a longer history).
Thus,
we propose a new regularization term to explore which time steps in the historical data
are
more important
for generating the offset behaviors.
We name this regularization term
the
temporal norm, which is expressed by:
\begin{equation}
\Vert \mathbf{U} \Vert_{T} =  \sum_{k=t}^{t-c}\Vert \mathbf{U}^{(k)} \Vert_{F}=\sum_{k=t}^{t-c}\sqrt{\sum_{j=1}^{r}\|{\mathbf{u}}^{j(k)}\|_2^2}
\end{equation}
where $\mathbf{U}^{(k)} = [\mathbf{u}^{1(k)},\dots,\mathbf{u}^{r(k)}] \in \mathbb{R}^{r \times r}$ is the weight matrix,
and $\mathbf{u}^{j(k)}$ indicates the importance of the behavior difference $\mathbf{e}^{(k)}$
for generating the offset $\mathbf{v}^{(k)}$ at the $k$-th time step.
This norm groups together weights for the vector of behavior differences at each time step,
and enforces sparsity between weights at different time steps to identify the most important time steps.


\color{black}

Using both norms
to generate consistent robot navigational behaviors while identifying important feature modalities and historical time steps,
the final objective function becomes:
\begin{equation}\label{final}
\min_{\mathbf{U},\mathbf{W}} \Vert \hat{\mathbf{Y}} - \mathbf{W}^\top \mathbf{X}  - \mathbf{U}^\top {\mathbf{E}} \Vert_{F}^{2} + \lambda_{1}\Vert \mathbf{W} \Vert_{M} + \lambda_{2}\Vert \mathbf{U} \Vert_{T}
\end{equation}
where $\lambda_1 \geq 0$ and $\lambda_2 \geq 0$ are the trade-off hyper-parameters to balance the loss function and regularization terms.


\subsection{Generating Consistent Behaviors during Execution}

After computing the optimal values of the weight matrices $\mathbf{W}$ and $\mathbf{U}$ according to Algorithm \ref{alg1},
at each time step during execution,
our approach extracts a feature vector  $\mathbf{x} \in \mathbb{R}^{d}$ from
the
robot's new observation.
Then, it computes its corresponding offset behavior by $\mathbf{v} = \mathbf{U}^{\top}\mathbf{e}$,
where $\mathbf{e}$ denotes the difference between the expected behavior $\mathbf{y}$ and the actual behavior $\hat{\mathbf{y}}$ (e.g., measured using a pose estimation technique based upon SLAM or visual odometry).


In addition, we predict the offset needed for the next time step so our approach can proactively generate behaviors that take
into account future behavior differences.
The predicted offset behavior  $\tilde{\mathbf{v}}$ is estimated by\footnote{\label{supplementary}Derivation is presented in the supplementary material at:
\url{http://hcr.mines.edu/publication/TerrainAdapt_Supp.pdf}}:
\begin{equation}
\tilde{\mathbf{v}} = \sum_{k=t}^{t-c}\Big(\big(\mathbf{U}^{(k)\top}\big)^{-1}\big(\mathbf{y}^{(k)} - \mathbf{W}^{(k)\top}\mathbf{x}^{(k)}\big) \Big)
\end{equation}

Including both offset behaviors for the current and future time steps,
our approach allows the robot to generate
consistent actual navigational behaviors by:
\begin{equation}
\mathbf{y} = \mathbf{W}^{\top}\mathbf{x} +
\begin{bmatrix}
\mathbf{I}_{r}\\
\mathbf{U}
\end{bmatrix}
^{\top}
\begin{bmatrix}
\tilde{\mathbf{v}} \\
\mathbf{e}
\end{bmatrix}
\end{equation}
where $\mathbf{I}_{r} \in \mathbb{R}^{r \times r}$ is an identity matrix.



\begin{algorithm}[t]
    \SetKwInOut{Input}{Input}
    \SetKwInOut{Output}{Output}
    \SetKwInOut{return}{return}

    \Input{$\mathbf{X}\in\mathbb{R}^{d\times n}$, $\mathbf{Y}\in\mathbb{R}^{r\times n}$, and
    $\mathbf{E}\in\mathbb{R}^{rc\times n}$}

    \Output{The weight matrices $\mathbf{W}\in \mathbb{R}^{d \times r}$ and $\mathbf{U}\in \mathbb{R}^{rc \times r}$}

    Initialize $\mathbf{W}\in \mathbb{R}^{d \times r}$ and $\mathbf{U}\in \mathbb{R}^{rc \times r}$;

    \While{not converge}
    {
    Calculate the block diagonal matrix $\mathbf{Q}$ with $i$-th diagonal block given as $\frac{1}{2\Vert \mathbf{W}_{i}\Vert_{F}}\mathbf{I}_{q_i}$;

    Compute the matrix $\mathbf{W}$ according to Eq. (\ref{optW2});

    Calculate the block diagonal matrix $\mathbf{P}$ with the $k$-th diagonal block as $\frac{1}{2\Vert \mathbf{U}^{(k)}\Vert_{F}}\mathbf{I}_{r}$;

    Compute the matrix $\mathbf{U}$ according to Eq. (\ref{optZ2});
    }
    \textbf{return: } $\mathbf{W}\in \mathbb{R}^{d \times r}$ and $\mathbf{U}\in \mathbb{R}^{rc \times r}$

 \caption{The proposed algorithm to solve the formulated regularized optimization problem in Eq. (\ref{final}).} 
 \label{alg1}
\end{algorithm}

\subsection{Optimization Algorithm}

As the second contribution,
we design an iterative optimization algorithm to obtain the optimal solution to the proposed regularized optimization problem in Eq. (\ref{final}).
The optimization problem is challenging to solve in general
because of the two non-smooth structured regularization terms.
The algorithm is presented in Algorithm \ref{alg1}.

To solve for the optimal weight matrix $\mathbf{W}$, we minimize Eq. (\ref{final}) with respect to $\mathbf{W}$, resulting in:
 \begin{equation}\label{optW1}
  2\mathbf{X}^\top \mathbf{X} \mathbf{W} - 2\mathbf{X}\hat{\mathbf{Y}} + 2\mathbf{XU}^\top\mathbf{E}  + \lambda_{1}\mathbf{Q}\mathbf{W} = 0
 \end{equation}
where $\mathbf{Q}$ is a diagonal matrix with the $i$-th diagonal block given as $\frac{1}{2\Vert \mathbf{W}_{i}\Vert_{F}}\mathbf{I}_{q_i}$ and $\mathbf{I}_{q_i}$ is an identity matrix.
Then, we compute $\mathbf{W}$ in a closed-form solution by:
 \begin{equation}\label{optW2}
\mathbf{W} =(  2\mathbf{X}^\top \mathbf{X}  + \lambda_{1}\mathbf{Q})^{-1} (2\mathbf{X}\hat{\mathbf{Y}} - 2\mathbf{XU}^\top\mathbf{E})
 \end{equation}
Because $\mathbf{Q}$ and $\mathbf{W}$ are
interdependent, an iterative algorithm is required to compute them.  \color{black}

To compute $\mathbf{U}$, we calculate the derivative of the objective function in Eq. (\ref{final}) and set the equation to zero as:
 \begin{equation}\label{optZ1}
  2\mathbf{E}^\top \mathbf{E} \mathbf{U} - 2\mathbf{E}\hat{\mathbf{Y}} + 2\mathbf{EW}^\top\mathbf{X}  + \lambda_{2}\mathbf{P}\mathbf{U} = 0
 \end{equation}
where $\mathbf{P}$ is a diagonal matrix with the $k$-th diagonal block as $\frac{1}{2\Vert \mathbf{U}^{(k)}\Vert_{F}}\mathbf{I}_{r}$, with $\mathbf{I}_{r}$ being an identity matrix.
Then, we compute $\mathbf{U}$ in a closed-form solution as:
 \begin{equation}\label{optZ2}
  \mathbf{U} = (2\mathbf{E}^\top \mathbf{E}    + \lambda_{2}\mathbf{P})^{-1} (2\mathbf{E}\hat{\mathbf{Y}} - 2\mathbf{EW}^\top\mathbf{X})
 \end{equation}
This $\mathbf{U}$ is then used to calculate $\mathbf{W}$ in the next iteration.


\noindent \textbf{Convergence.} 
Algorithm \ref{alg1} is guaranteed to converge to the global optimal solution in the formulated regularized optimization problem in Eq. (\ref{final}).
The proof is provided in the supplementary material$^{\ref{supplementary}}$.

\noindent \textbf{Complexity.} As the formulated optimization problem in Eq. (\ref{final}) is convex, Algorithm \ref{alg1} converges fast (e.g., within tens of iterations only).
In each iteration of our algorithm, computing Steps 3 and 5 is trivial. Steps 4 and 6 can be computed by solving a system of linear equations with quadratic complexity. 

\section{Experiments}\label{sec:EXPT}
This section presents the experiment and implementation setups, and presents an analysis of experimental results obtained by our approach and previous terrain adaptation methods.

\subsection{Experimental Setups}

\begin{table*}[htb]
\centering
\caption{Quantitative results based on ten runs for scenarios when the robot traverses over \textbf{individual types of unstructured terrain} shown in Fig. \ref{Scenario_1}.
Successful runs (with no failures) are used to calculate the metrics of traversal time, inconsistency and jerkiness.
Our approach is compared with LfD~\cite{ge2015learning}, MM-LfD~\cite{wu2018multi} and TRAL~\cite{siva2019robot}.}
\label{tab:RTT1}
\tabcolsep=0.1cm
\begin{tabular}{ | c| c| c| c| c|| c| c| c| c|| c| c| c| c|| c| c| c| c|}
\hline
\multicolumn{1}{|c|}{} & \multicolumn{4}{c||}{Failure Rate (/10) } & \multicolumn{4}{c||}{Traversal Time (s)} & \multicolumn{4}{c||}{Inconsistency}& \multicolumn{4}{c|}{Jerkiness (m/s$^3$)} \\
\hline Terrain & LfD & MM-LfD &  TRAL & \textbf{Ours} & LfD  & MM-LfD & TRAL  & \textbf{Ours}& LfD  & MM-LfD & TRAL & \textbf{Ours}& LfD  & MM-LfD & TRAL  & \textbf{Ours}\\ \hline
Grass  & 0 & 0 & 0 &  $\mathbf{0}$ &
17.9 & 18.2 & $\mathbf{17.4}$ &  17.5 &
2.82 & 3.06 & $\mathbf{1.84}$ &  2.11 &
79.59 & 81.42 & $\mathbf{75.18}$ & 76.50 \\

Sand & 0 & 1 & 0 &  $\mathbf{0}$ &
15.3 & $\mathbf{12.7}$ & 15.9 &  13.1 &
4.72 & 4.62 & 4.67 &  $\mathbf{4.55}$  &
71.14 & 73.74 & 65.32 &  $\mathbf{64.22}$\\

Gravels & 0 & 0 & 0 & $\mathbf{0}$ &
 22.9 & 24.1 & 20.4 & $\mathbf{20.2}$ &
4.12 & 4.63 & 3.81 &  $\mathbf{3.04}$ &
44.95 & 48.27 & 40.26 & $\mathbf{39.48}$\\

M.Rock & 1 & 3 & 0 &  $\mathbf{0}$ &
33.2 & 36.9 & 29.4 &  $\mathbf{28.4}$ &
7.92 & 9.59 & 4.21 &  $\mathbf{2.41}$ &
141.48 & 144.22 & 113.34 & $\mathbf{111.18}$ \\

L.Rock & 6 & 6 & 2 & $\mathbf{1}$ &
57.8 & $\mathbf{55.3}$ & 63.4 &  60.9 &
24.79 & 28.50 & 9.51 & $\mathbf{7.84}$ &
52.55 & 54.30 & 49.50 & $\mathbf{48.36}$ \\
\hline
\end{tabular}
\end{table*}

\begin{table*}[htb]
\centering
\caption{Quantitative results for scenarios when the robot traverses over
\textbf{complex unstructured off-road terrain} shown in Fig. \ref{Scenario_2}. 
Successful runs (with no failures) are used to calculate the metrics of traversal time, inconsistency and jerkiness.}
\label{tab:RTT2}
\tabcolsep=0.1cm
\centering
\begin{tabular}{ | c| c| c| c| c|| c| c| c| c|| c| c| c| c|| c| c| c| c|}
\hline
\multicolumn{1}{|c|}{} & \multicolumn{4}{c||}{Failure Rate (/10)}& \multicolumn{4}{c||}{Traversal Time (s)}  & \multicolumn{4}{c||}{Inconsistency}& \multicolumn{4}{c|}{Jerkiness (m/s$^3$)}\\
\hline
Terrain & LfD  & MM-LfD  &  TRAL & \textbf{Ours} & LfD   & MM-LfD & TRAL & \textbf{Ours}& LfD   & MM-LfD & TRAL   & \textbf{Ours}& LfD  & MM-LfD  & TRAL  & \textbf{Ours}\\
\hline

Gr-M.Rock & 5 & 7 & 2 & $\mathbf{1}$ &
22.0 & $\mathbf{19.7}$ & 27.5 & 23.1 &
15.62 & 17.28 & 14.54 & $\mathbf{12.31}$ &
65.01 & 80.56 & 58.36 & $\mathbf{51.93}$ \\

Gr-L.Rock & 8 & 9 & 3 &  $\mathbf{3}$ &
$\mathbf{27.2}$ & 27.4 & 29.4 & 28.8 &
93.53 & 101.26 & 68.87 &  $\mathbf{51.16}$&
34.96 & 40.51 & 28.22 & $\mathbf{24.55}$ \\

M.Terrain I  & 0 & 1 & 0 &  $\mathbf{0}$ & 
$\mathbf{17.9}$ & 18.2 & 19.4 &  18.9 &
3.97 & 5.38 & 4.91 & $\mathbf{3.39}$ &
72.37 & 83.17 & 70.36 &  $\mathbf{68.55}$ \\ 

M.Terrain II & 5 & 7 & $\mathbf{4}$ &  5 &
23.1 & $\mathbf{18.1}$ & 30.2 &  28.5 &
93.37 & 95.47 & 80.43 &  $\mathbf{78.82}$ &
54.13 & 77.49 & 52.51 &  $\mathbf{47.93}$ \\

\hline
\end{tabular}
\end{table*}

We use the Clearpath Husky robot in our field experiments.
The robot is equipped with an Intel Reasense D435 color-depth camera and an Ouster OS1-64 LiDAR. The robot also has a variety of sensors to measure its internal states,
including IMU readings, wheel odometry, power consumption, motor speed and battery status.
If sensor readings have a lower frame rate,
linear interpolation is used to get a steady 30 Hz frame rate from all sensors.

\begin{figure}[h]
\centering
\includegraphics[width=0.48\textwidth]{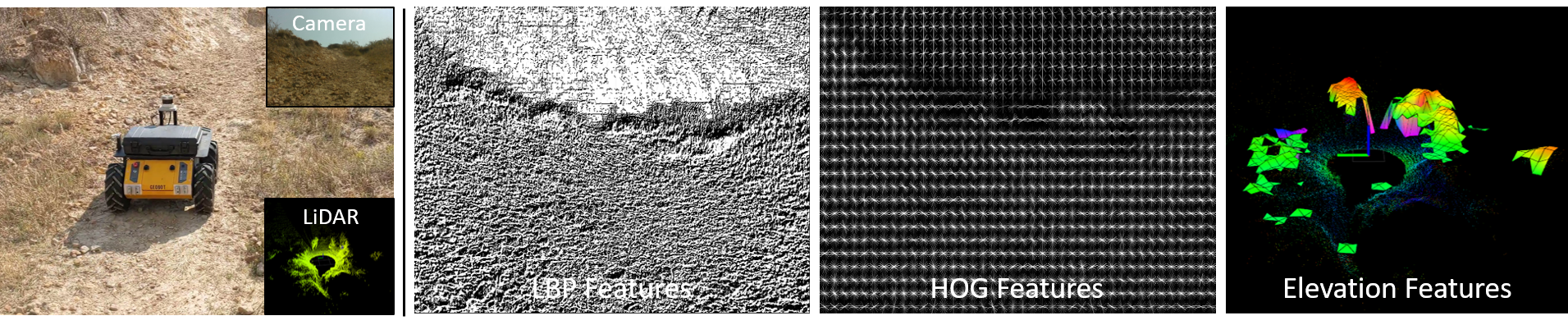}
\caption{Multimodal features used to characterized surrounding terrain.}\label{features}
\end{figure}

To represent unstructured terrain,
we implement multiple visual features extracted from color images to describe different terrain characteristics,
including Histogram of Oriented Gradients (HOG)~\cite{dalal2005histograms} to describe shape and Local Binary Patterns (LBP)~\cite{ahonen2006face} to describe texture.
\color{black}
We also compute an
 elevation map
\color{black} from LiDAR data to represent the grid-wise elevation of the terrain around the robot.
These features are
visualized \color{black}
in Fig. \ref{features}.
During training, expected navigational behaviors are provided by human operators
who control the robot to traverse unstructured terrains as fast as possible
while maintaining safety (e.g., no flipping or crashing).
During training and execution phases,
actual robot navigational behaviors are estimated from LiDAR based SLAM~\cite{legoloam2018}.
We use a sequence of fifteen frames (i.e., $c=15$),
and hyperparameters $\lambda_{1}=0.1$ and $\lambda_{2}=10$ for all experiments.

 \begin{figure}
 \centering
 \includegraphics[width=0.48\textwidth]{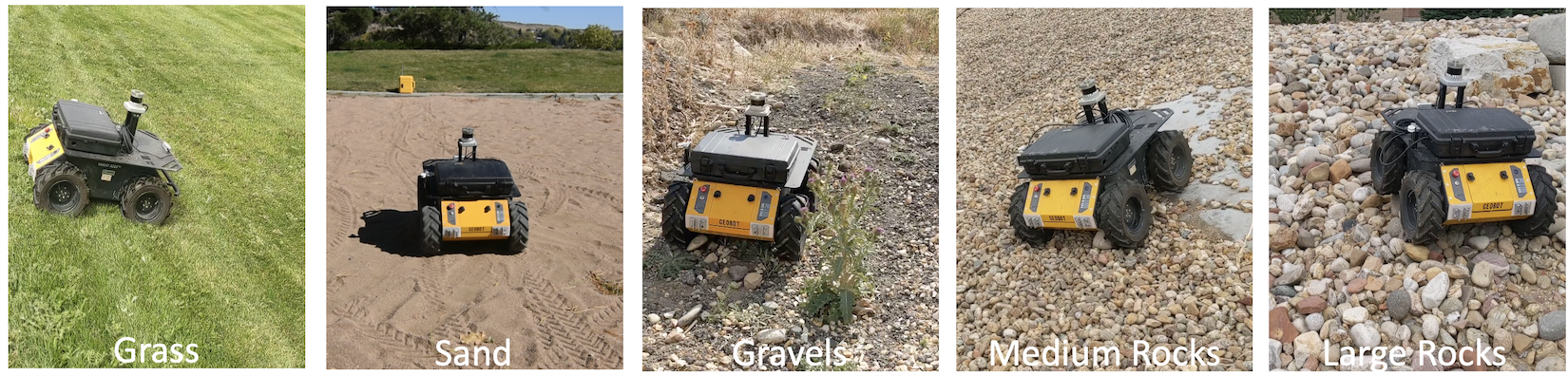}
 \caption{Individual types of unstructured terrain.
 }\label{Scenario_1}
 \end{figure}

\begin{figure}
\centering
\includegraphics[width=0.48\textwidth]{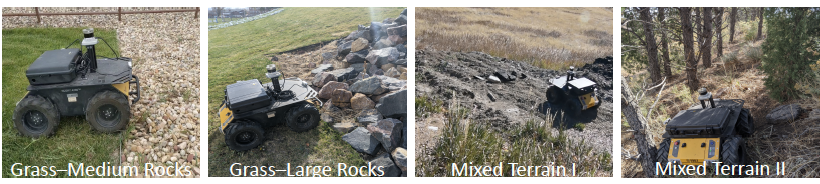}
\caption{Complex unstructured off-road terrain.
}\label{Scenario_2}
\end{figure}

We compare our approach with several previous state of the art learning-based robot navigation techniques,
including Learning from Demonstration (LfD) for robot navigation~\cite{ge2015learning},
multi-modal LfD (MM-LfD)~\cite{wu2018multi},
and Terrain Representation and Apprenticeship Learning (TRAL)~\cite{siva2019robot}.
To comprehensively evaluate the performance of robot navigation in a quantitative fashion,
we use four evaluation metrics:
\begin{itemize}
    \item \emph{Failure Rate (FR)}: This metric is defined as the number of times the robot fails to complete the navigation task across a set of experimental trials.
    If a robot flips or is stopped by a terrain obstacle,
  it is considered a failure. \color{black}
    Lower values of FR indicate better performance. \color{black}
    \item \emph{Traversal Time (TT)}: This metric is defined as the time taken to complete the navigation task over given terrain.
   Smaller values of TT indicate better performance.
    \color{black}
    \item \emph{Inconsistency}: This metric is defined as the error between the expected behavior and the actual behavior in terms of robot poses (linear position and angular). \color{black} 
        Lower values of inconsistency indicate better performance. \color{black}
    \color{black}
    \item \emph{Jerkiness}: This metric is defined as the average sum of the acceleration derivatives along all axes, with lower values indicating better performance. Jerkiness indicates how 
    smooth a robot can traverse over a terrain. 
        Because state estimation and SLAM 
        methods (e.g., based on Kalman filters) may assume smooth robot motions, jerkiness is a useful metric.
\end{itemize}

\subsection{Navigating over Individual Types of Unstructured Terrain}

In this set of experiments,
the robot navigates over individual terrain using off-road tracks.
Each track is made up of one type of terrain and is 
approximately ten meters long.
Five types of terrain are used in our experiments, which are illustrated in Fig. \ref{Scenario_1}.
Our approach is trained on data collected while the robot is manually controlled by an expert to traverse the terrain.
Then, the learned model is deployed on the robot to autonomously navigate over the terrain.
Evaluation metrics for each method are computed across ten trials on each type of terrain track.
\color{black}



The quantitative results achieved by our approach and the comparison to other methods on the metric of failure rate and traversal time are presented in Table \ref{tab:RTT1}.
For simple individual terrains, such as grass and gravel,
all methods allow the robot to successfully traverse over the terrain. However, for more challenging terrain, especially large rocks,
both LfD and MM-LfD have a high failure rate, whereas, TRAL and our approach have a low failure rate.
Our approach only has one failure over the difficult large-rock terrain and outperforms other tested methods.
The presented traversal time is computed by averaging the traversal time across all successful runs, i.e., it excludes the failed trials captured by the FR metric.
It is observed that all methods have a similar traversal time.
In successful runs, both LfD methods show less traversal time compared to other methods over rocky terrain,
although they also have a much higher failure rate.
Thus, the emphasis of high speed traversal used by these methods produces an unreliable system when the robot traverses unstructured off-road terrain in real-world field environments.

Table \ref{tab:RTT1} also presents the quantitative results for the inconsistency and jerkiness metrics.
We observe that both LfD and MM-LfD methods do not perform well and have higher values of inconsistency over individual types of terrain, especially on the large-rock terrain.
TRAL has lower inconsistency and performs the best over the grass terrain.
Our proposed method outperforms the previous approaches and obtains the lowest averaged inconsistency value.
Finally, we also evaluate the tested methods using the jerkiness metric.
An interesting observation is that the medium-rock terrain causes the
largest jerkiness measure.
\color{black}
This is caused by the fact that robots can navigate smoothly over simple terrain types (e.g., grass, sand and gravel), but robots have to move slowly over large-rock terrain and the slow motion may reduce jerkiness.
TRAL and our approach achieve relatively similar jerkiness values, which significantly outperform LfD and MM-LfD.

\begin{figure}[htb]
  \centering
    \subfigure[Optimization hyperparameters]{
     \includegraphics[height=1.55in]{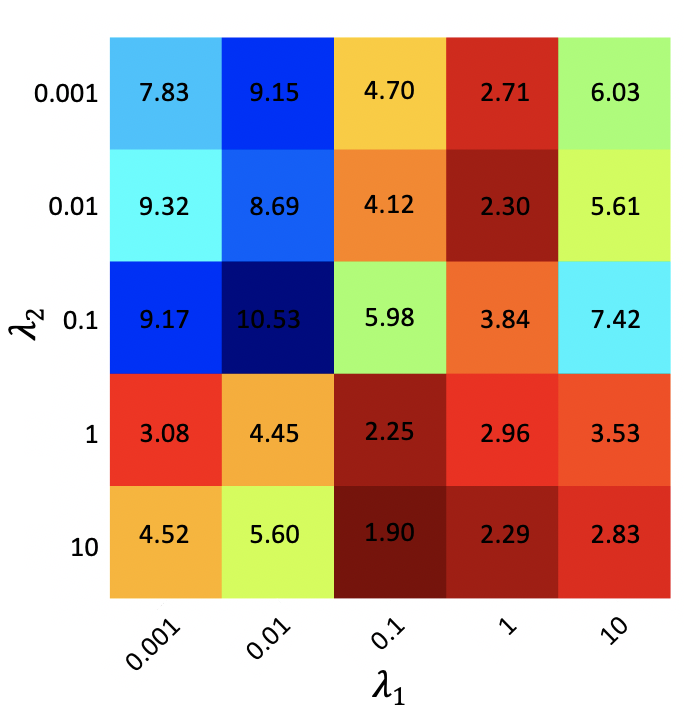}\label{lambda}
      }
    \subfigure[Sequence length]{
    \includegraphics[width=1.48in]{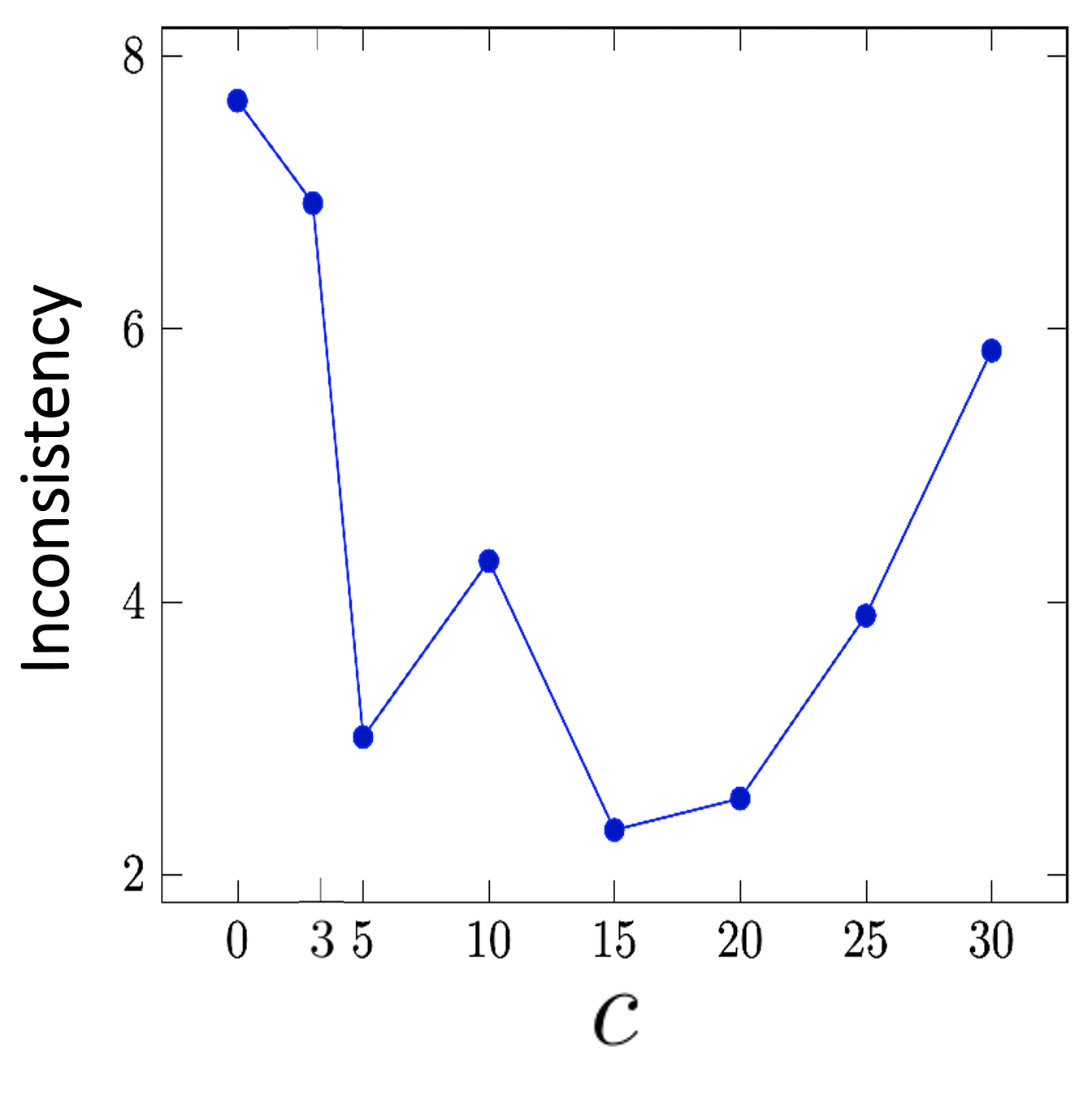}\label{curve_c}
      }
      \caption{Analysis of our approach based on the inconsistency metric.}
  \end{figure}

\subsection{Navigating over Complex Off-road Unstructured Terrains}

In the second set of experiments,
we evaluate our approach when the robot navigates over complex off-road unstructured terrain.
The tracks in these experiments either show transitions between
different terrain types (i.e., grass to large rocks and grass to medium rocks)
or a mixture of different terrain types in real off-road environments (i.e., Mixed Terrain I  and Mixed Terrain II),
as shown in Fig. \ref{Scenario_2}.
In the experiments, no additional training is performed and the previously trained model from individual types of unstructured terrains is used directly. 

Table \ref{tab:RTT2} presents the quantitative results obtained by our approach and the comparison with other methods. In this set of experiments, it is observed that each of the methods have a much higher failure rate in general,
especially over the Mixed Terrain II and the terrain with transitions from grass to large rocks.
Our approach and TRAL generally perform equally well and significantly outperform LfD and MM-LfD in terms of failure rate.
Similar to the experiments over individual types of terrain,
we observe that both LfD methods have much small traversal time for successful runs, but they have a significantly higher failure rate compared with TRAL and our approach.
Moreover, LfD and MM-LfD are outperformed with respect to inconsistency and jerkiness, with MM-LfD performing worst among all tested methods, especially on the jerkiness metric.
In these tested scenarios of more complex off-road terrain,
our approach clearly outperforms other methods and obtains the state-of-the-art performance in terms of consistency and jerkiness.


\subsection{Discussion}

\noindent \textbf{Hyperparameter Analysis:} The hyperparameters $\lambda_{1}$ and $\lambda_{2}$ in Eq. (\ref{final}) are implemented to balance the loss function and regularization terms.
Fig. \ref{lambda} depicts how the inconsistency metric changes given varying $\lambda$ values based on cross validation during training.
It is observed that $\lambda_{1} \in (0.1,10)$ and $\lambda_2 \in (1,10)$ result in good performance in general. The best result is obtained when $\lambda_{1}=0.1$ and $\lambda_{2}=10$. These values are used during our execution for all experiments.

\begin{wrapfigure}{RI}{0.23\textwidth}
\centering
\vspace{-6pt}
\includegraphics[width=0.975\linewidth]{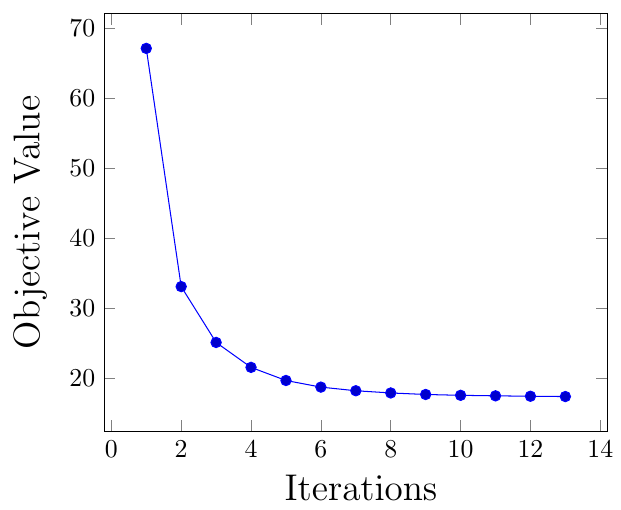}
\centering
\vspace{-9pt}
\caption{
Convergence. }
\label{fig:converge}
\vspace{-6pt}
\end{wrapfigure}

\noindent \textbf{Dependence on Frame Sequence:}
Our approach uses a sequence of historical frames with length $c$ to generate consistent behaviors. Fig. \ref{curve_c} shows the change of the inconsistency metric according to $c$.
It is observed that our approach generally performs well when $c \in (15, 20)$, and we observe that the inconsistency metric is worst when either a small number ($c < 5$) or a big number ($c>30$) is used under the sensing framerate of 30 Hz.

\noindent \textbf{Convergence:} Experimental results in Fig. \ref{fig:converge} illustrate the fast, monotonically decreasing convergence of Algorithm   \ref{alg1}, which validates our theoretical proof.

\section{Conclusion}\label{sec:CONC}

In this paper, we introduce a novel approach for consistent behavior generation that enables ground robots' actual behavior to more accurately match expected behaviors while adapting to a variety of unstructured off-road terrain.
Our approach learns offset behaviors to compensate for the inconsistency between the actual and expected behaviors without the need to explicitly model various setbacks,
and learns the importance of the multi-modal features to improve the representation of terrain for better adaptation. 
Additionally, we implement an optimization algorithm to solve the formulated problem with a theoretical guarantee.
Our proposed approach is extensively evaluated in real-world field environments.
Experimental results have shown the our approach enables robots to traverse complex unstructured off-road terrain with more consistent navigational behaviors and it outperforms previous methods.

\bibliographystyle{ieeetr}
\bibliography{v10}

\begin{thebibliography}{10}

\bibitem{chiang2020safety}
H.-T.~L. Chiang, B.~HomChaudhuri, L.~Smith, and L.~Tapia, ``{Safety,
  challenges, and performance of motion planners in dynamic environments},'' in
  {\em Robotics Research}, pp.~793--808, Springer, 2020.

\bibitem{bruzzone2012robots}
L.~Bruzzone and G.~Quaglia, ``Locomotion systems for ground mobile robots in
  unstructured environments,'' {\em Mechanical Sciences}, vol.~3, no.~2,
  pp.~49--62, 2012.

\bibitem{silver2010learning}
D.~Silver, J.~A. Bagnell, and A.~Stentz, ``{Learning from demonstration for
  autonomous navigation in complex unstructured terrain},'' {\em International
  Journal of Robotics Research}, vol.~29, no.~12, pp.~1565--1592, 2010.

\bibitem{siva2019robot}
S.~Siva, M.~Wigness, J.~Rogers, and H.~Zhang, ``Robot adaptation to
  unstructured terrains by joint representation and apprenticeship learning,''
  in {\em Robotics: Science and Systems}, 2019.

\bibitem{bermudez2012performance}
F.~L.~G. Bermudez, R.~C. Julian, D.~W. Haldane, P.~Abbeel, and R.~S. Fearing,
  ``Performance analysis and terrain classification for a legged robot over
  rough terrain,'' in {\em International Conference on Intelligent Robots and
  Systems}, 2012.

\bibitem{brooks2005vibration}
C.~A. Brooks and K.~Iagnemma, ``Vibration-based terrain classification for
  planetary exploration rovers,'' {\em Transactions on Robotics}, vol.~21,
  no.~6, pp.~1185--1191, 2005.

\bibitem{devjanin1983six}
E.~Devjanin, V.~Gurfinkel, E.~Gurfinkel, V.~Kartashev, A.~Lensky, A.~Y.
  Shneider, and L.~Shtilman, ``{The six-legged walking robot capable of terrain
  adaptation},'' {\em Mechanism and Machine Theory}, vol.~18, no.~4,
  pp.~257--260, 1983.

\bibitem{dupont2008terrain}
E.~M. DuPont, C.~A. Moore, and R.~G. Roberts, ``{Terrain classification for
  mobile robots traveling at various speeds: An eigenspace manifold
  approach},'' in {\em International Conference on Robotics and Automation},
  2008.

\bibitem{hudjakov2009aerial}
R.~Hudjakov and M.~Tamre, ``{Aerial imagery terrain classification for
  long-range autonomous navigation},'' in {\em International Symposium on
  Optomechatronic Technologies}, 2009.

\bibitem{peynot2014learned}
T.~Peynot, S.-T. Lui, R.~McAllister, R.~Fitch, and S.~Sukkarieh, ``{Learned
  stochastic mobility prediction for planning with control uncertainty on
  unstructured terrain},'' {\em Journal of Field Robotics}, vol.~31, no.~6,
  pp.~969--995, 2014.

\bibitem{han2017sequence}
F.~Han, X.~Yang, Y.~Zhang, and H.~Zhang, ``{Sequence-based multimodal
  apprenticeship learning for robot perception and decision making},'' in {\em
  International Conference on Robotics and Automation}, 2017.

\bibitem{pastor2009learning}
P.~Pastor, H.~Hoffmann, T.~Asfour, and S.~Schaal, ``{Learning and
  generalization of motor skills by learning from demonstration},'' in {\em
  International Conference on Robotics and Automation}, 2009.

\bibitem{wigness2018robot}
M.~Wigness, J.~G. Rogers, and L.~E. Navarro-Serment, ``{Robot navigation from
  human demonstration: Learning control behaviors},'' in {\em International
  Conference on Robotics and Automation}, 2018.

\bibitem{ge2015learning}
F.~Ge, W.~Moore, and M.~Antolovich, ``Learning from demonstration using gmm,
  chmm and dhmm: A comparison,'' in {\em Australian Joint Conference on
  Artificial Intelligence}, 2015.

\bibitem{knight2001balancing}
R.~Knight, F.~Fisher, T.~Estlin, B.~Engelhardt, and S.~Chien, ``Balancing
  deliberation and reaction, planning and execution for space robotic
  applications,'' in {\em International Conference on Intelligent Robots and
  Systems}, 2001.

\bibitem{borges2019strategy}
C.~D.~B. Borges, A.~M.~A. Almeida, I.~C.~P. J{\'u}nior, and J.~J. d. M.~S.
  Junior, ``A strategy and evaluation method for ground global path planning
  based on aerial images,'' {\em Expert Systems With Applications}, vol.~137,
  pp.~232--252, 2019.

\bibitem{jumikis1970introduction}
A.~Jumikis, ``Introduction to terrain-vehicle systems,'' {\em Soil Science},
  vol.~110, no.~1, p.~77, 1970.

\bibitem{urmson2004high}
C.~Urmson, J.~Anhalt, M.~Clark, T.~Galatali, J.~P. Gonzalez, J.~Gowdy,
  A.~Gutierrez, S.~Harbaugh, M.~Johnson-Roberson, H.~Kato, {\em et~al.}, ``High
  speed navigation of unrehearsed terrain: Red team technology for grand
  challenge 2004,'' {\em Robotics Institute - Carnegie Mellon University},
  2004.

\bibitem{shimoda2005potential}
S.~Shimoda, Y.~Kuroda, and K.~Iagnemma, ``Potential field navigation of high
  speed unmanned ground vehicles on uneven terrain,'' in {\em International
  Conference on Robotics and Automation}, 2005.

\bibitem{spenko2006hazard}
M.~Spenko, Y.~Kuroda, S.~Dubowsky, and K.~Iagnemma, ``Hazard avoidance for
  high-speed mobile robots in rough terrain,'' {\em Journal of Field Robotics},
  vol.~23, no.~5, pp.~311--331, 2006.

\bibitem{sadhukhan2004terrain}
D.~Sadhukhan, C.~Moore, and E.~Collins, ``Terrain estimation using internal
  sensors,'' in {\em International Conference on Robotics and Automation},
  2004.

\bibitem{trautmann2011mobility}
E.~Trautmann and L.~Ray, ``{Mobility characterization for autonomous mobile
  robots using machine learning},'' {\em Autonomous Robots}, vol.~30, no.~4,
  pp.~369--383, 2011.

\bibitem{manduchi2005obstacle}
R.~Manduchi, A.~Castano, A.~Talukder, and L.~Matthies, ``{Obstacle detection
  and terrain classification for autonomous off-road navigation},'' {\em
  Autonomous Robots}, vol.~18, no.~1, pp.~81--102, 2005.

\bibitem{parker1996alliance}
L.~E. Parker, ``{L-ALLIANCE: Task-oriented multi-robot learning in
  behavior-based systems},'' {\em Advanced Robotics}, vol.~11, no.~4,
  pp.~305--322, 1996.

\bibitem{parker2000lifelong}
L.~E. Parker, ``Lifelong adaptation in heterogeneous multi-robot teams:
  Response to continual variation in individual robot performance,'' {\em
  Autonomous Robots}, vol.~8, no.~3, pp.~239--267, 2000.

\bibitem{he2019underactuated}
B.~He, S.~Wang, and Y.~Liu, ``Underactuated robotics: a review,'' {\em
  International Journal of Robotics Research}, vol.~16, no.~4, 2019.

\bibitem{nikolaidis2017human}
S.~Nikolaidis, Y.~X. Zhu, D.~Hsu, and S.~Srinivasa, ``Human-robot mutual
  adaptation in shared autonomy,'' in {\em International Conference on
  Human-Robot Interaction}, 2017.

\bibitem{papadakis2013terrain}
P.~Papadakis, ``Terrain traversability analysis methods for unmanned ground
  vehicles: A survey,'' {\em Engineering Applications of Artificial
  Intelligence}, vol.~26, no.~4, pp.~1373--1385, 2013.

\bibitem{watson1994case}
I.~Watson and F.~Marir, ``Case-based reasoning: A review,'' {\em The Knowledge
  Engineering Review}, 1994.

\bibitem{fox1997dynamic}
D.~Fox, W.~Burgard, and S.~Thrun, ``The dynamic window approach to collision
  avoidance,'' {\em Robotics and Automation Magazine}, vol.~4, no.~1,
  pp.~23--33, 1997.

\bibitem{nabulsi2006multiple}
S.~Nabulsi, M.~Armada, and H.~Montes, ``Multiple terrain adaptation approach
  using ultrasonic sensors for legged robots,'' in {\em International
  Conference on Climbing and Walking Robots}, 2006.

\bibitem{thrun1998lifelong}
S.~Thrun, ``{Lifelong learning algorithms},'' in {\em Learning to learn},
  Springer, 1998.

\bibitem{kleiner2002towards}
A.~Kleiner, M.~Dietl, and B.~Nebel, ``Towards a life-long learning soccer
  agent,'' in {\em Robot Soccer World Cup}, 2002.

\bibitem{plagemann2008learning}
C.~Plagemann, S.~Mischke, S.~Prentice, K.~Kersting, N.~Roy, and W.~Burgard,
  ``Learning predictive terrain models for legged robot locomotion,'' in {\em
  International Conference on Intelligent Robots and Systems}, 2008.

\bibitem{erden2008free}
M.~S. Erden and K.~Leblebicio{\u{g}}lu, ``{Free gait generation with
  reinforcement learning for a six-legged robot},'' {\em Robotics and
  Autonomous Systems}, vol.~56, no.~3, pp.~199--212, 2008.

\bibitem{sidek2013exploiting}
N.~Sidek and N.~Sarkar, ``Exploiting wheel slips of mobile robots to improve
  navigation performance,'' {\em Advanced Robotics}, vol.~27, no.~8,
  pp.~627--639, 2013.

\bibitem{wu2018multi}
Y.~Wu, R.~Wang, L.~F. D'Haro, R.~E. Banchs, and K.~P. Tee, ``Multi-modal robot
  apprenticeship: Imitation learning using linearly decayed dmp+ in a
  human-robot dialogue system,'' in {\em International Conference on
  Intelligent Robots and Systems}, 2018.

\bibitem{dalal2005histograms}
N.~Dalal and B.~Triggs, ``Histograms of oriented gradients for human
  detection,'' in {\em Conference on Computer Vision and Pattern Recognition},
  2005.

\bibitem{ahonen2006face}
T.~Ahonen, A.~Hadid, and M.~Pietikainen, ``Face description with local binary
  patterns: Application to face recognition,'' {\em Transactions on Pattern
  Analysis and Machine Intelligence}, no.~12, pp.~2037--2041, 2006.

\bibitem{legoloam2018}
T.~Shan and B.~Englot, ``Lego-loam: Lightweight and ground-optimized lidar
  odometry and mapping on variable terrain,'' in {\em International Conference
  on Intelligent Robots and Systems}, 2018.

\end{thebibliography}

\end{document}


\maketitle
\thispagestyle{empty}
\pagestyle{empty}

\section*{Proof of Convergence of the Optimization Algorithm}
In the following, we prove that Algorithm 1 in the main paper decreases the value of the objective function in Eq. (5) (of the main paper) with each iteration and converges to the global optimal solution. But first, we present a lemma:
\begin{lemma}\label{lemma1}
For any two given vectors $\mathbf{a}$ and $\mathbf{b}$, the following inequality relation holds:
$\|\mathbf{b}\|_2 - \frac{\|\mathbf{b}\|_2^2}{2\|\mathbf{a}\|_2}
\leq
\|\mathbf{a}\|_2 - \frac{\|\mathbf{a}\|_2^2}{2\|\mathbf{a}\|_2}$
\end{lemma}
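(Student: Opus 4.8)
The plan is to reduce this vector inequality to an elementary scalar inequality, since both sides depend on $\mathbf{a}$ and $\mathbf{b}$ only through their Euclidean norms. I would abbreviate $x = \|\mathbf{a}\|_2 \ge 0$ and $y = \|\mathbf{b}\|_2 \ge 0$, and note that the statement implicitly assumes $\mathbf{a} \neq \mathbf{0}$ so that $x > 0$ and the divisions are well defined (this is exactly the setting in which the lemma is invoked by the algorithm, where $\mathbf{a}$ is the current iterate's row block).

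The first step is to simplify the right-hand side, which collapses immediately:
\begin{equation}
\|\mathbf{a}\|_2 - \frac{\|\mathbf{a}\|_2^2}{2\|\mathbf{a}\|_2}
= x - \frac{x^2}{2x}
= x - \frac{x}{2}
= \frac{x}{2}.
\nonumber
\end{equation}
So the claimed inequality is equivalent to $y - \frac{y^2}{2x} \le \frac{x}{2}$. The second step is to clear the denominator: since $x > 0$, multiplying both sides by $2x$ preserves the inequality and yields the equivalent statement $2xy - y^2 \le x^2$.

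The third and final step is to rearrange this into a manifestly nonnegative form, namely
\begin{equation}
0 \le x^2 - 2xy + y^2 = (x - y)^2 = \big(\|\mathbf{a}\|_2 - \|\mathbf{b}\|_2\big)^2,
\nonumber
\end{equation}
which holds for all real $x, y$. Reversing the chain of equivalent transformations then recovers the original inequality, completing the argument.

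I do not anticipate a genuine obstacle here: the entire proof rests on recognizing the perfect-square structure hidden in the expression, and the only point requiring a word of care is the nonvanishing of $\mathbf{a}$, which I would state explicitly rather than leave implicit. It is worth remarking that equality holds precisely when $\|\mathbf{a}\|_2 = \|\mathbf{b}\|_2$; this observation is what makes the lemma useful as the descent step in the reweighting iteration, since strict decrease of the surrogate objective is obtained whenever the successive iterates differ in norm.
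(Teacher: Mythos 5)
Your proof is correct and is essentially the same argument as the paper's: both reduce the claim to the nonnegativity of $(\|\mathbf{a}\|_2 - \|\mathbf{b}\|_2)^2$ and divide by $2\|\mathbf{a}\|_2$, with you running the chain of equivalences in reverse order. Your explicit note that $\mathbf{a}\neq\mathbf{0}$ is needed for the divisions is a small but worthwhile addition that the paper leaves implicit.
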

\begin{proof}
\begin{equation}
-(\Vert\mathbf{b}\Vert_{2}-\Vert\mathbf{a}\Vert_{2})^2 \leq 0  \nonumber
\end{equation}
\begin{equation}
-\Vert\mathbf{b}\Vert_{2}^{2} - \Vert\mathbf{a}\Vert_{2}^{2} + 		        2\Vert\mathbf{b}\Vert_{2}\Vert\mathbf{a}\Vert_{2} \leq 0   \nonumber
\end{equation}
\begin{equation}
2\Vert\mathbf{b}\Vert_{2}\Vert\textbf{a}\Vert_{2} - \Vert\mathbf{b}\Vert_{2}^{2} \leq \Vert\textbf{a}\Vert_{2}^{2}       \nonumber
\end{equation}	
\begin{equation}
\Vert\mathbf{b}\Vert_{2} - \dfrac{\Vert\mathbf{b}\Vert_{2}^{2}}{2\Vert\textbf{a}\Vert_{2}} \leq  \Vert\textbf{a}\Vert_{2} - \dfrac{\Vert\textbf{a}\Vert_{2}^{2}}{2\Vert\textbf{a}\Vert_{2}}  \nonumber
\end{equation}
\end{proof}
From Lemma \ref{lemma1}, we can derive the following corollary:
\begin{corollary}\label{corollary1}
For any two given matrices $\mathbf{A}$ and $\mathbf{B}$ , the following inequality relation holds:
 \begin{eqnarray}
\Vert\mathbf{B}\Vert_F - \frac{\Vert\mathbf{B}\Vert_F^2}{2\Vert\mathbf{A}\Vert_F}
\leq
\Vert\mathbf{A}\Vert_F - \frac{\Vert\mathbf{A}\Vert_F^2}{2\Vert\mathbf{A}\Vert_F} \nonumber
\end{eqnarray}
\end{corollary}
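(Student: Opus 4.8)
The plan is to reduce the matrix statement to the already-proven vector statement of Lemma \ref{lemma1} by vectorization. The only identity I would rely on is that the Frobenius norm of any matrix equals the Euclidean norm of its vectorization: writing $\operatorname{vec}(\cdot)$ for the operator that stacks all entries of a matrix into a single column vector, we have
\begin{equation}
\Vert\mathbf{A}\Vert_F = \sqrt{\textstyle\sum_{i,j} a_{ij}^2} = \Vert\operatorname{vec}(\mathbf{A})\Vert_2 \nonumber
\end{equation}
and likewise $\Vert\mathbf{B}\Vert_F = \Vert\operatorname{vec}(\mathbf{B})\Vert_2$.

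First I would set $\mathbf{a} = \operatorname{vec}(\mathbf{A})$ and $\mathbf{b} = \operatorname{vec}(\mathbf{B})$, so that the scalars $\Vert\mathbf{a}\Vert_2$ and $\Vert\mathbf{b}\Vert_2$ coincide with $\Vert\mathbf{A}\Vert_F$ and $\Vert\mathbf{B}\Vert_F$ respectively. Next I would apply Lemma \ref{lemma1} verbatim to this pair of vectors, obtaining
\begin{equation}
\Vert\mathbf{b}\Vert_2 - \frac{\Vert\mathbf{b}\Vert_2^2}{2\Vert\mathbf{a}\Vert_2} \leq \Vert\mathbf{a}\Vert_2 - \frac{\Vert\mathbf{a}\Vert_2^2}{2\Vert\mathbf{a}\Vert_2} \nonumber
\end{equation}
Finally, substituting the Frobenius-norm identities back into both sides converts this directly into the claimed inequality, completing the argument.

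An equally short alternative I would keep in reserve is a direct repetition of the proof of Lemma \ref{lemma1}: that argument never used any vector-specific structure, only the elementary fact that $-(\Vert\mathbf{b}\Vert_2 - \Vert\mathbf{a}\Vert_2)^2 \le 0$ for the nonnegative scalars $\Vert\mathbf{a}\Vert_2, \Vert\mathbf{b}\Vert_2$. Since $\Vert\mathbf{A}\Vert_F$ and $\Vert\mathbf{B}\Vert_F$ are themselves nonnegative scalars, the identical chain of inequalities with $\Vert\cdot\Vert_F$ in place of $\Vert\cdot\Vert_2$ yields the corollary.

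I do not expect any genuine obstacle here; the result is a clean specialization of the lemma. The only point worth verifying is well-definedness of the denominators, i.e. that $\mathbf{A} \neq \mathbf{0}$ so that $\Vert\mathbf{A}\Vert_F > 0$ --- exactly the same implicit nondegeneracy assumption already present in Lemma \ref{lemma1}.
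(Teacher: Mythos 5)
Your proposal is correct and takes essentially the same route as the paper, which states the corollary as an immediate consequence of Lemma \ref{lemma1} without writing out a separate proof; your vectorization identity $\Vert\mathbf{A}\Vert_F = \Vert\operatorname{vec}(\mathbf{A})\Vert_2$ is exactly the standard way to make that step explicit. Your observation that the denominators require $\Vert\mathbf{A}\Vert_F > 0$ is a fair (and shared) caveat, but otherwise there is nothing to add.
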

\begin{theorem}\label{thm1}
Algorithm 1 converges to the converges to the global optimal solution to the optimization problem in Eq. (5) (of the main paper).
\end{theorem}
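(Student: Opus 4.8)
The plan is to recognize Algorithm~1 as an iteratively reweighted (majorization--minimization) scheme and to use Corollary~\ref{corollary1} as the bridge from the surrogate to the true objective. Denote the optimization variable in Eq.~(5) by $\mathbf{W}$ and, guided by the Frobenius-norm structure of the lemmas, write the objective as $\mathcal{L}(\mathbf{W}) = f(\mathbf{W}) + \lambda \sum_{i} \|\mathbf{W}_i\|_F$, where $f$ collects the smooth convex loss terms and the sum of Frobenius norms over blocks $\mathbf{W}_i$ is the structured (group-sparsity) regularizer. Each iteration fixes diagonal weights $\mathbf{D}_t$ built from the current iterate, with each block weight proportional to $1/(2\|(\mathbf{W}_t)_i\|_F)$, and then solves the resulting weighted, smooth subproblem for $\mathbf{W}_{t+1}$. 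I would first make this surrogate explicit by defining $J(\mathbf{W}, \mathbf{D}_t) = f(\mathbf{W}) + \lambda \sum_i \|\mathbf{W}_i\|_F^2 / (2\|(\mathbf{W}_t)_i\|_F)$, so that minimizing $J(\cdot, \mathbf{D}_t)$ over $\mathbf{W}$ is exactly the update performed at step $t$.

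Next I would establish monotone descent. By optimality of the subproblem solution, $J(\mathbf{W}_{t+1}, \mathbf{D}_t) \le J(\mathbf{W}_t, \mathbf{D}_t)$, which after subtracting $f$ relates the weighted squared norms of consecutive iterates. I then apply Corollary~\ref{corollary1} block-by-block with $\mathbf{A} = (\mathbf{W}_t)_i$ and $\mathbf{B} = (\mathbf{W}_{t+1})_i$: each term satisfies $\|\mathbf{B}\|_F - \|\mathbf{B}\|_F^2/(2\|\mathbf{A}\|_F) \le \|\mathbf{A}\|_F - \|\mathbf{A}\|_F^2/(2\|\mathbf{A}\|_F)$. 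Summing over $i$, combining with the subproblem-optimality inequality on the loss, and cancelling the common weighted-norm quantities yields $\mathcal{L}(\mathbf{W}_{t+1}) \le \mathcal{L}(\mathbf{W}_t)$, so the true objective is non-increasing at every iteration.

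Finally I would argue convergence to the global optimum. Since $\mathcal{L}$ is a sum of a loss bounded below and a nonnegative regularizer, the monotone sequence $\{\mathcal{L}(\mathbf{W}_t)\}$ is bounded below and therefore converges. To identify the limit, I would show that a fixed point $\mathbf{W}^\star = \arg\min_{\mathbf{W}} J(\mathbf{W}, \mathbf{D}^\star)$ satisfies the stationarity condition $\nabla f(\mathbf{W}^\star) + \lambda \mathbf{D}^\star \mathbf{W}^\star = 0$, which coincides block-by-block with the subgradient optimality condition $0 \in \partial \mathcal{L}(\mathbf{W}^\star)$, because $\mathbf{D}^\star (\mathbf{W}^\star)_i = (\mathbf{W}^\star)_i / (2\|(\mathbf{W}^\star)_i\|_F)$ is precisely an element of the subdifferential of the Frobenius-norm regularizer. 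Convexity of Eq.~(5) then upgrades this stationary point to a global minimizer.

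The main obstacle is the bridging step: ensuring the surrogate decrease transfers to the true objective, and in particular handling blocks with $\|(\mathbf{W}_t)_i\|_F = 0$, where the weight $1/(2\|(\mathbf{W}_t)_i\|_F)$ is undefined. I would address this with the standard regularized weight $1/(2\sqrt{\|(\mathbf{W}_t)_i\|_F^2 + \epsilon})$ and let $\epsilon \to 0$, showing the perturbed objective is majorized by the same Corollary-based bound and that its minimizers approach those of Eq.~(5). Matching the fixed-point condition to the subdifferential of the regularizer is the other delicate point, since it is what certifies global rather than merely stationary optimality.
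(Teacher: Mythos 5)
Your proposal follows the same core route as the paper's proof: both read Algorithm~1 as an iteratively reweighted (majorization--minimization) scheme, use optimality of the weighted least-squares subproblem to obtain a decrease of the surrogate objective, and then invoke Lemma~\ref{lemma1}/Corollary~\ref{corollary1} block-by-block with $\mathbf{A}$ the current iterate and $\mathbf{B}$ the new one to transfer that decrease to the true objective with its sum-of-Frobenius-norms regularizer. The one structural mismatch is scope: Eq.~(5) has \emph{two} coupled variable blocks, the feature weights $\mathbf{W}$ (regularized by $\lambda_1\|\mathbf{W}\|_M$) and the offset weights $\mathbf{U}$ (regularized by $\lambda_2\|\mathbf{U}\|_T$), and Algorithm~1 alternates between them; the paper runs your argument once for the $\mathbf{W}$-update with $\mathbf{U}$ fixed, repeats it verbatim for the $\mathbf{U}$-update with the new $\mathbf{W}$ fixed, and then chains the two inequalities into a single per-iteration descent. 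Your single-block argument would need that second, mechanical pass to cover the full iteration, but nothing new is required. On the other hand, your write-up is more careful than the paper's on exactly the two points it silently skips: the division by $\|\mathbf{W}_i(s)\|_F$ when a block vanishes (your $\epsilon$-smoothing), and the passage from ``the objective is non-increasing and bounded below'' to ``the limit is the global minimizer.'' The paper asserts the latter directly from convexity, which on its own is a non sequitur; your fixed-point argument---that the stationarity condition of the weighted subproblem at a limit point coincides with $0\in\partial\mathcal{L}(\mathbf{W}^\star)$ because the reweighted term reproduces a subgradient of the block Frobenius norm---is precisely the justification the paper omits, so on that step your proposal is the stronger of the two.
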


\begin{proof}
According to Step 4 of Algorithm 1, at iteration step $s$, the value of $W^{i}(s+1)$ can be given as:
\begin{eqnarray}\label{eq:proof1_1}
\mathbf{W}^{i}(s+1)&=&\Vert \mathbf{\hat{Y}}-\mathbf{W}^\top(s) \mathbf{X} - \mathbf{U}^\top(s)\mathbf{E}\Vert_{F}^{2}\nonumber \\
\quad \quad &+&\sum_{i=1}^{m}(\lambda_1 Tr(\mathbf{W}_{i})^\top\mathbf{Q}_{i}(s+1)(\mathbf{W}_{i}))
\end{eqnarray}
where $\mathbf{Q}_{i}(s+1) = \frac{1}{2\Vert \mathbf{W}_{i}(s)\Vert_{F}} \mathbf{I}_{q_i}$. Also, from Step 6 of Algorithm 1 we have:
\begin{eqnarray}\label{eq:proof1_2}
\mathbf{U}^{i}(s+1)&=&\Vert \mathbf{\hat{Y}}-\mathbf{W}^\top(s+1) \mathbf{X} - \mathbf{U}^\top(s)\mathbf{E}\Vert_{F}^{2}\nonumber \\
\quad \quad &+&\sum_{k=t}^{t-c}(\lambda_2 Tr(\mathbf{U}^{(k)})^\top\mathbf{P}^{(k)}(s+1)(\mathbf{U}^{(k)}))
\end{eqnarray}
where $\mathbf{P}^{(k)}(t+1) = \frac{1}{2\Vert \mathbf{U}^{(k)}\Vert_{F}} \mathbf{I}_{r}$.
Then, we can derive that
\begin{eqnarray}\label{eq:proof2}
&& \mathcal{J}(s+1) + \nonumber \\ &&\sum_{i=1}^{m}(\lambda_1 Tr(\mathbf{W}_{i}(s+1))^\top\mathbf{Q}_{i}(s+1)(\mathbf{W}_{i}(s+1))) \nonumber\\
&\leq & \mathcal{J}(s) + \sum_{i=1}^{m}(\lambda_1 Tr(\mathbf{W}_{i}(s))^\top\mathbf{Q}_{i}(s)(\mathbf{W}_{i}(s)))
\end{eqnarray}
where $\mathcal{J}(s)=\Vert \mathbf{\hat{Y}}-\mathbf{W}^\top(s) \mathbf{X} - \mathbf{U}^\top(s)\mathbf{E}\Vert_{F}^{2}$ 
After substituting the definition of each of $\mathbf{Q}^{i}$, we obtain
\begin{eqnarray}\label{eq:proof3}
&&\mathcal{J}(s+1)+ \sum_{i=1}^{m}(\lambda_1\dfrac{\|\mathbf{W}_{i}(s+1)\|_F^2}{2\|\mathbf{W}_{i}(s)\|_F}) \nonumber\\
&\leq & \mathcal{J}(s)+ \sum_{i=1}^{m}(\lambda_1\dfrac{\|\mathbf{W}_{i}(s)\|_F^2}{2\|\mathbf{W}_{i}(s)\|_F})
\end{eqnarray}
From Lemma \ref{lemma1} and Corollary \ref{corollary1}, we get $\forall i=1,\dots,m$
\begin{eqnarray}\label{eq:proof4}
&&\sum_{i=1}^{m}\Bigg({\|\mathbf{W}_{i}(s+1)\|_F} - {\dfrac{\|\mathbf{W}_{i}(s+1)\|_F^2}{2\|\mathbf{W}_{i}(s)\|_F}}\Bigg)\nonumber\\ &\leq&\sum_{i=1}^{m}\Bigg({\|\mathbf{W}_{i}(s)\|_F} - {\dfrac{\|\mathbf{W}_{i}(s)\|_F^2}{2\|\mathbf{W}_{i}(s)\|_F}}\Bigg).
\end{eqnarray}
Adding Eq. (\ref{eq:proof3}) and (\ref{eq:proof4}) on both sides, we have
\vspace{-6pt}
\begin{eqnarray}\label{eq:proof6}
&&\mathcal{J}(s+1)+ \sum_{i=1}^{m}(\lambda_1{\|\mathbf{W}_{i}(s+1)\|_F})\nonumber\\
&\leq&\mathcal{J}(s)+ \sum_{i=1}^{m}(\lambda_1{\|\mathbf{W}_{i}(s)\|_F})
\end{eqnarray}
Eq. (\ref{eq:proof6}) implies that when the value of $\mathbf{U}$ is kept a constant, the updated value of weight matrix $\mathbf{W}$, decreases the value of the objective function in each iteration. With the updated value of the weight matrix $\mathbf{W}$, we can derive that:
\begin{eqnarray}\label{eq:proof2u}
&& \mathcal{F}(s+1) + \nonumber \\ &&\sum_{k=t}^{t-c}(\lambda_2 Tr(\mathbf{U}^{(k)}(s+1))^\top\mathbf{P}^{(k)}(s+1)(\mathbf{U}^{(k)}(s+1))) \nonumber\\
&\leq & \mathcal{F}(s) + \sum_{k=t}^{t-c}(\lambda_2 Tr(\mathbf{U}^{(k)}(s))^\top\mathbf{P}^{(k)}(s)(\mathbf{U}^{(k)}(s)))
\end{eqnarray}
where $\mathcal{F}(s)=\Vert \mathbf{\hat{Y}}-\mathbf{W}^\top(s+1) \mathbf{X} - \mathbf{U}^\top(s)\mathbf{E}\Vert_{F}^{2}$.
After substituting the definition of each of $\mathbf{P}^{(k)}$ in Eq. (\ref{eq:proof2u}), we obtain
\begin{eqnarray}\label{eq:proof3u}
&&\mathcal{F}(s+1)+ \sum_{k=t}^{t-c}(\lambda_2\dfrac{\|\mathbf{U}^{(k)}(s+1)\|_F^2}{2\|\mathbf{Q}^{(k)}(s)\|_F}) \nonumber\\
&\leq & \mathcal{F}(s)+ \sum_{k=t}^{t-c}(\lambda_2\dfrac{\|\mathbf{U}^{(k)}(s)\|_F^2}{2\|\mathbf{U}^{(k)}(s)\|_F})
\end{eqnarray}
Similar to Eq. (\ref{eq:proof4}), from Lemma \ref{lemma1} and Corollary \ref{corollary1}, we get $\forall k=t,\dots,t-c$
\begin{eqnarray}\label{eq:proof4u}
&&\sum_{k=t}^{t-c}\Bigg({\|\mathbf{U}^{(k)}(s+1)\|_F} - {\dfrac{\|\mathbf{U}^{(k)}(s+1)\|_F^2}{2\|\mathbf{U}^{(k)}(s)\|_F}}\Bigg)\nonumber\\ &\leq&\sum_{k=t}^{t-c}\Bigg({\|\mathbf{U}^{(k)}(s)\|_F} - {\dfrac{\|\mathbf{U}^{(k)}(s)\|_F^2}{2\|\mathbf{U}^{(k)}(s)\|_F}}\Bigg).
\end{eqnarray}
Adding Eq. (\ref{eq:proof3u}) and Eq. (\ref{eq:proof4u}) on both sides we obtain
\begin{eqnarray}\label{eq:proof7}
&&\mathcal{F}(s+1)+ \sum_{k=t}^{t-c}(\lambda_2{\|\mathbf{U}^{(k)}(s+1)\|_F})\nonumber\\
&\leq&\mathcal{F}(s)+ \sum_{k=t}^{t-c}(\lambda_2{\|\mathbf{U}^{(k)}(s)\|_F})
\end{eqnarray}
Eq. (\ref{eq:proof7}) implies that for a fixed value of $\mathbf{W}$, the updated value of weight matrix $\mathbf{U}$, decreases the value of the objective function in each iteration. A single iteration of Algorithm 1 involves both Eq. (\ref{eq:proof6}) and Eq. (\ref{eq:proof7}). Therefore, we add these equations on both sides to obtain:
\begin{eqnarray}\label{eq:proof8}
&&\mathcal{J}(s+1)+ \mathcal{F}(s+1)+ \|\mathbf{W}(s+1)\|_M + \|\mathbf{U}(s+1)\|_T\nonumber\\
&&\leq\mathcal{J}(s)+ \mathcal{F}(s)+ \|\mathbf{W}(s)\|_M + \|\mathbf{U}(s)\|_T
\end{eqnarray}
The loss function $\mathcal{J}$ contains values of loss function without the updated weight matrix $\mathbf{U}$ and can be dropped. Also, $\mathcal{J}$ only denotes the value of the loss function at the intermediate Step 4 of Algorithm 1, whereas, $\mathcal{F}$ is the value of loss at the end of each iteration. Then we have
\begin{eqnarray}\label{eq:proof9}
&&\mathcal{F}(s+1)+ \|\mathbf{W}(s+1)\|_M + \|\mathbf{U}(s+1)\|_T\nonumber\\
&&\leq \mathcal{F}(s)+ \|\mathbf{W}(s)\|_M + \|\mathbf{U}(s)\|_T
\end{eqnarray}

Eq. (\ref{eq:proof9}) decreases the value of the objective function with each iteration. As our objective function is convex, Algorithm \ref{thm1} converges to the global optimal value. Therefore, Algorithm 1 converges to the optimal solution to the optimization problem in Eq. (5) of the main paper.
\end{proof}

\section*{Derivation for Predicting Future Offset Behaviors}

Future offsets are predicted by predicting the robot navigational behaviors one step ahead of time. 
To start with predicting the future behavior difference, we make the following assumptions: For the most recent time step $k=t$, the weights corresponding to the behavior difference and features are set as $\mathbf{u}^{j(k)}=1$ and $\mathbf{w}_i^{j(k)}= 0$ respectively. It is to be noted that it is not a restriction as for any value of weights $\mathbf{u}^{j(k)}$ different from '1', it can directly be incorporated as a difference in behaviors $\mathbf{e}^{k}$. Setting $\mathbf{w}_i^{j(k)}= 0$  makes use of only the previous terrain features to estimate the future behavior difference. 

We know that the offset behaviors $\mathbf{v}_{i}$ are computed as $\mathbf{v}_{i} = \mathbf{U}^{\top}\mathbf{e}_{i}$, and the actual behaviors are given as $\hat{\mathbf{y}_{i}} = \mathbf{y}_{i} + \mathbf{v}_{i}$. The value of $\hat{\mathbf{y}_{i}}^{k}$ and $\mathbf{y}_{i}^{k}$,  $\forall k= (t-1),\dots,(t-c)$; are obtained from a history of previous $c$-time steps. Thus the behavior difference at each of the previous $c$-time steps can be given as:
\begin{equation}
    \mathbf{v}_{i}^{k} = \hat{\mathbf{y}_{i}}^{k}-\mathbf{y}_{i}^{k}
\end{equation}

At the current time step, before the robot has executed its navigational behaviors, offset behaviors are not known and we take it's estimate as $\hat{\mathbf{v}}^{(t \vert t-1)}$. As we take the assumption that at $k=t, \mathbf{w}_{i}^{j(k)} = 0$, theoretically the actual navigational behaviors at current time $\hat{\mathbf{y}}_{i}^{t}$ doesn't depend on the current value of the terrain features $\mathbf{x}^{(t)}$. Then the estimate for the current navigational behaviors $\tilde{y}^{(t \vert t-1)}$  can be given as
\begin{equation}\label{eq3}
\tilde{\mathbf{y}}^{(t \vert t-1)} = \sum_{k=t-1}^{t-c}\sum_{j=1}^{r}\sum_{i=1}^{m}\mathbf{w}_{i}^{j(k)}^{\top}\mathbf{x}_{i}^{j(k)} + \hat{\mathbf{v}}^{(t \vert t-1)}
\end{equation}
Then the best prediction of offset behaviors can be given via its expectation as: 
\begin{equation}
\hat{\mathbf{v}}^{(t \vert t-1)} = \mathbb{E}(\mathbf{e}^{(t)}) + \mathbb{E}( \sum_{k=t-1}^{t-c} \sum_{j=1}^{r}\mathbf{u}^{j(k)}^{\top}\mathbf{e}^{(k)})
\end{equation}

The behavior differences $\mathbf{e}$,  depend on both the environment and the robot. The behavior difference at each time step is random with zero mean. Then the following equality hold true:
\begin{eqnarray}
\hat{\mathbf{v}}^{(t \vert t-1)} =  \mathbb{E}( \sum_{k=t-1}^{t-c} \sum_{j=1}^{r}\mathbf{u}^{j(k)}^{\top}\mathbf{e}^{j(k)})
\end{eqnarray}
\begin{eqnarray}
\hat{\mathbf{v}}^{(t \vert t-1)} =  (\mathbf{U}^{\top} - \mathbf{1})(\mathbf{e}^{(k)})
\end{eqnarray}
where $\mathbf{1}$ is an matrix of ones, that is each element is one. 
Substituting the values of $\mathbf{e}^{(k)}$, in the above equation, we obtain
\begin{eqnarray}
\hat{\mathbf{v}}^{(t \vert t-1)}  =  (\mathbf{U}^{\top} - \mathbf{1})(\mathbf{U}^{\top})^{-1}\mathbf{v}^{(k)}
\end{eqnarray}
Substituting this value of $\hat{\mathbf{v}}^{(t \vert t-1)} $ in Eq. (\ref{eq3}), we obtain
\begin{eqnarray}\label{eq5}
\tilde{\mathbf{y}}^{(t \vert t-1)} = \sum_{k=t-1}^{t-c}\sum_{j=1}^{r}\sum_{i=1}^{m}\mathbf{w}_{i}^{j(k)}^{\top}\mathbf{x}_{i}^{j(k)} \nonumber \\ + (\mathbf{1}- (\mathbf{U}^{\top})^{-1})\mathbf{v}^{(k)}
\end{eqnarray}
Substituting the value of $\mathbf{v}^{(k)}$ while calculating $\tilde{\mathbf{e}}^{(t)} =  \hat{\mathbf{y}}^{(t)} - \tilde{\mathbf{y}}^{(t \vert t-1)}$ and also $\mathbf{u}^{j(k)}=1$, we obtain the predicted offset behaviors as:
\begin{equation}
\tilde{\mathbf{v}}^{(t)}  =\sum_{k=t}^{t-c}\Big(\big(\mathbf{U}^{(k)\top}\big)^{-1}\big(\mathbf{y}^{(k)} - \mathbf{W}^{(k)\top}\mathbf{x}^{(k)}\big) \Big)
\end{equation}

This predicted offset behaviors take into account the future behavior differences to help in achieving consistent behaviors.